\DeclareMathOperator{\E}{E}
\DeclareMathOperator{\Cov}{Cov}
\DeclareMathOperator{\Var}{Var}
\DeclareMathOperator{\Tr}{Tr}
\DeclareMathOperator{\PMI}{PMI}
\DeclareMathOperator{\diag}{diag}
\newtheorem{theorem}{Theorem}
\newtheorem{lemma}{Lemma}
\newtheorem{corollary}{Corollary}[lemma]
\newtheorem{assum}{Assumption}
\theoremstyle{remark}
\newtheorem{remark}{Remark}[lemma]
\begin{document}

\title{Context Vectors are Reflections of Word Vectors in Half the Dimensions}

\author{\name Zhenisbek Assylbekov \email zhassylbekov@nu.edu.kz \\
\name Rustem Takhanov \email rustem.takhanov@nu.edu.kz \\
\addr Nazarbayev University, Department of Mathematics,\\
53 Kabanbay Batyr ave., Astana 010000 Kazakhstan}


\maketitle

\begin{abstract}
This paper takes a step towards theoretical analysis of the relationship between word embeddings and context  embeddings in models such as word2vec. We start from basic probabilistic assumptions on the nature of word vectors, context vectors, and text generation. These assumptions are supported either empirically or theoretically by the existing literature. Next, we show that under these assumptions the widely-used word-word PMI matrix is approximately a random symmetric Gaussian ensemble. This, in turn, implies that context vectors are reflections of word vectors in approximately half the dimensions. As a direct application of our result, we suggest a theoretically grounded way of tying weights in the SGNS model.
\end{abstract}

\section{Introduction and Main Result}
Today word embeddings play an important role in many natural language processing tasks, from predictive language models and machine translation to image annotation and question answering, where they are usually `plugged in' to a larger model. An understanding of their properties is of interest as it may allow the development of better performing embeddings and improved interpretability of models using them. This paper takes a step in this direction. 

\noindent\textbf{Notation}: We let $\mathbb{R}$ denote the  real numbers. Bold-faced lowercase letters ($\mathbf{x}$) denote vectors in Euclidean space, bold-faced uppercase letters ($\mathbf{X}$) denote matrices, plain-faced lowercase letters ($x$) denote scalars, plain-faced uppercase letters ($X$) denote scalar random variables, $\|\cdot\|$ denotes the Euclidean norm: $\|\mathbf{x}\|:=\sqrt{\mathbf{x}^\top\mathbf{x}}$, `i.i.d.' stands for `independent and identically distributed'. We use the sign $\sim$ to abbreviate the phrase `distributed as', and the sign $\propto$ to abbreviate `proportional to'. $\Tr(\mathbf{A})$ is used to denote the trace of a matrix $\mathbf{A}$. $M_{\mathbf{x}}(\mathbf{t})$ is the moment-generating function of a random vector $\mathbf{x}$ at $\mathbf{t}$: $M_{\mathbf{x}}(\mathbf{t})=\E[e^{\mathbf{t}^\top\mathbf{x}}]$. $\odot$ is the Hadamard product (element-wise multiplication).

Assuming that words have already been converted into indices, let $\{1,\ldots,n\}$ be a finite vocabulary of words. Following the setup of the widely used {word2vec} model \shortcite{mikolov2013distributed}, we will use \textit{two} vectors per each word $i$: 
\begin{itemize}
    \item $\mathbf{w}_i$ when $i$ is a center word,
    \item $\mathbf{c}_i$ when $i$ is a context word.
\end{itemize}  
We make the following key assumptions in our work.
\begin{assum}\label{as:isotropy}
    A priori word vectors $\mathbf{w}_1,\ldots,\mathbf{w}_n\in\mathbb{R}^{d}$ are i.i.d. draws from isotropic multivariate Gaussian distribution:
    \begin{equation}
        \mathbf{w}_i\,\,{\stackrel{\text{iid}}{\sim}}\,\,\mathcal{N}\left(\mathbf{0},\,\textstyle{\frac1d}\mathbf{I}\right),\label{eq:wordvec}
    \end{equation}
    where $\mathbf{I}$ is the $d\times d$ identity matrix. 
\end{assum}
\noindent This is motivated by the work of \shortciteA{arora2016latent}, where the ensemble of word vectors consists of i.i.d draws generated by $\mathbf{v} = s\cdot\hat{\mathbf{v}}$, with $\hat{\mathbf{v}}$ being from the spherical Gaussian distribution $\mathcal{N}(\mathbf{0},\mathbf{I})$, and $s$ being a scalar random variable with bounded expectation and range. In their work, the norm $\|\mathbf{v}_i\|$ of the word vector for a word $i$ is related to its unigram probability $p(i)$, and to allow a sufficient dynamic range for these probabilities they needed the multiplier $s$. In our work, unigram probabilities are not mapped to vector lengths, and this is why we do not need such multiplier. Direct relationship between word probabilities and word vector norms is also implied by the model of \shortciteA{hashimoto2016word}.
\begin{assum}\label{as:context}
    Context vectors $\mathbf{c}_1,\ldots,\mathbf{c}_n$ are related to word vectors according to
    \begin{equation}
    \mathbf{c}_i=\mathbf{Qw}_i,\quad i=1,\ldots,n,\label{eq:cont_vec}
    \end{equation}
    for some orthogonal matrix $\mathbf{Q}\in\mathbb{R}^{d\times d}$.   
\end{assum}
\noindent This is mainly guided by the work of \shortciteA{press2017using}, who showed that context vectors in the SGNS model of \shortciteA{mikolov2013distributed} are distributed similarly to word vectors in the sense that pairwise cosine distances between word (input) embeddings strongly correlate with the corresponding pairwise cosine distances between context (output) embeddings (see their Table~4). This is why we choose the transform from word vectors to context vectors to be orthogonal as it preserves inner products and consequently Euclidean norms. Notice, that $\mathbf{c}_i\,\,{\stackrel{\text{iid}}{\sim}}\,\,\mathcal{N}\left(\mathbf{0},\frac1d\mathbf{I}\right).$

\begin{assum}\label{as:model}
Given a word $j$, probability of any word $i$ being in its context\footnote{Context is a fixed-size symmetric window around the given word.} is given by
\begin{equation}
p(i\mid j) \propto p_i\cdot{e^{\mathbf{w}_j^\top\mathbf{c}_i}}\label{eq:model}
\end{equation}
where $p_i=p(i)$ is the unigram probability for the word $i$, which is inverse proportional to its smoothed frequency rank $r_i$,  i.e. 
\begin{equation}
p_i\propto\frac{1}{r_i^{1-\alpha}}, \qquad\alpha\in(0, 1].\label{eq:unigram}
\end{equation}
\end{assum}
\noindent This is similar to the log-linear model of \shortciteA{arora2016latent}, but differs in the following aspects: $\mathbf{c}_i$ is not assumed to do a random walk over the unit sphere with bounded displacement; we use the factor $p_i$ to directly capture word frequencies and do not model them via vector norms. Equation~\eqref{eq:model} can be interpreted as follows: probability that the word $i$ occurs in the context of the word $j$ is probability that the word $i$ occurs anywhere in a large corpus, corrected for the relationship between words $i$ and $j$. This approach was already considered by \shortciteA{melamud2017simple} but in their work $i$ is the entire left context of the word $j$, and $\mathbf{c}_i$ is a vector representation of this entire context.  Also, like \shortciteA{arora2016latent} but unlike \shortciteA{melamud2017simple}, we use the model \eqref{eq:model} for a theoretical analysis rather than for fitting to data. Smoothing of the unigram probabilities (i.e. raising them to power $1-\alpha$) is motivated by the works of \shortciteA{mikolov2013distributed,levy2015improving,pennington2014glove}, where $\alpha=0.25$ is a typical choice. We notice here that $\alpha=0^+$ gives us Zipf's law \cite{zipf1935psycho}, whereas $\alpha=1$ gives us uniform distribution of word frequencies which is not valid empirically but on the other hand can be used to explain additivity of word vectors \cite{gittens2017skip}. The specific value of $\alpha$ is important for 

The relationship between word (input) and context (output) vectors was addressed in several previous works. E.g., in recurrent neural network language modeling (RNNLM), tying input and output embeddings is a useful regularization technique introduced earlier \shortcite{bengio2001neural} and studied in more details recently \shortcite{press2017using,inan2016tying}. This technique improves language modeling quality (measured as perplexity of a held-out text) while decreasing the total number of trainable parameters almost two-fold since most of the parameters in RNNLM are due to embedding matrices. The direct application of this regularization technique to SGNS worsens the quality of word vectors as was shown empirically by \shortciteA{press2017using} and by \shortciteA{gulordava2018represent}. This worsening was predicted earlier by \shortciteA{goldberg2014word2vec} using a simple linguistic observation that words usually do not appear in the contexts of themselves. This basically means that $\mathbf{Q}\ne\mathbf{I}$ in \eqref{eq:cont_vec}. At the same time, there is empirical evidence that the relationship between input and output embeddings is \textit{linear} \shortcite{mimno2017strange,gulordava2018represent}. In this paper, we provide a theoretical justification for this and reveal the exact form of the transform $\mathbf{Q}$. Our main contribution is the following 
\begin{theorem}\label{th:main}
Under Assumptions \ref{as:isotropy}, \ref{as:context}, and \ref{as:model} above, context vector $\mathbf{c}_i$ for a word $i$ is the reflection of the word vector $\mathbf{w}_i$ in approximately half of the dimensions.
\end{theorem}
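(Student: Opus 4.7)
The plan is to establish the intermediate claim announced in the abstract --- that under Assumptions~\ref{as:isotropy}--\ref{as:model} the word--word PMI matrix is approximately a symmetric Gaussian ensemble (GOE) --- and then read off the structure of $\mathbf{Q}$ from it.

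First I would compute the PMI matrix from the model: Bayes' rule applied to \eqref{eq:model} together with Assumption~\ref{as:context} gives
\[
\PMI(i,j) \;=\; \log\frac{p(i\mid j)}{p(i)} \;=\; \mathbf{w}_j^\top\mathbf{Q}\mathbf{w}_i-\log Z_j,\qquad Z_j \;=\; \sum_{k}p_k\,e^{\mathbf{w}_j^\top\mathbf{Q}\mathbf{w}_k}.
\]
I would then show that the partition-function correction $\log Z_j$ is asymptotically constant in $j$. Conditioning on $\mathbf{w}_j$, orthogonality of $\mathbf{Q}$ and Assumption~\ref{as:isotropy} give $\mathbf{w}_j^\top\mathbf{Q}\mathbf{w}_k\sim\mathcal{N}(0,\|\mathbf{w}_j\|^2/d)$ for $k\ne j$, so the Gaussian MGF yields $\E[Z_j\mid\mathbf{w}_j]\approx e^{\|\mathbf{w}_j\|^2/(2d)}$. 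A $\chi^2$ concentration bound on $\|\mathbf{w}_j\|^2$ combined with a second-moment bound on $Z_j$ (whose smallness is controlled by $\sum_k p_k^2$, where the smoothing exponent $\alpha$ of Assumption~\ref{as:model} enters) then yields $\log Z_j=\tfrac{1}{2d}+o(1)$ uniformly in $j$. Hence the PMI matrix equals the bilinear form $\mathbf{W}^\top\mathbf{Q}\mathbf{W}$ (with $\mathbf{W}$ the $d\times n$ matrix of word vectors) up to an asymptotically constant shift.

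Next I would use two built-in properties of PMI to pin down $\mathbf{Q}$. Because PMI is symmetric in $(i,j)$ by definition, the self-consistency identity $\mathbf{w}_j^\top(\mathbf{Q}-\mathbf{Q}^\top)\mathbf{w}_i=\log Z_j-\log Z_i$ combined with the previous paragraph forces $\mathbf{Q}\approx\mathbf{Q}^\top$ --- the right-hand side is too small to sustain a nondegenerate antisymmetric bilinear form on the left. Writing $\mathbf{Q}=\mathbf{U}\mathbf{D}\mathbf{U}^\top$ with $\mathbf{D}=\diag(\pm 1)$, and setting $\tilde{\mathbf{W}}=\mathbf{U}^\top\mathbf{W}$ (still i.i.d.\ $\mathcal{N}(\mathbf{0},\mathbf{I}/d)$ by orthogonal invariance), the matrix $\mathbf{W}^\top\mathbf{Q}\mathbf{W}=\tilde{\mathbf{W}}^\top\mathbf{D}\tilde{\mathbf{W}}$ becomes the difference of two Wishart-type matrices of ranks $k$ and $d-k$, where $k$ is the number of $+1$ eigenvalues of $\mathbf{D}$. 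A direct first/second-moment calculation shows off-diagonal entries have mean $0$ and variance $1/d$ (matching the GOE), while diagonal entries have mean $(2k-d)/d=\tfrac{1}{d}\Tr(\mathbf{Q})$ and variance $2/d$. Matching to the GOE requires this diagonal bias to be negligible on the fluctuation scale, which forces $2k-d=o(d)$, i.e.\ $k=d/2+o(d)$.

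Finally, since $\mathbf{c}_i=\mathbf{U}\mathbf{D}\mathbf{U}^\top\mathbf{w}_i$ and $\mathbf{D}$ carries $\approx d/2$ entries equal to $+1$ and $\approx d/2$ equal to $-1$, the coordinates of $\mathbf{c}_i$ in the orthonormal basis $\mathbf{U}$ coincide with those of $\mathbf{w}_i$ on approximately half of the axes and are negated on the remaining half, which is precisely the statement of Theorem~\ref{th:main}. I expect the main obstacle to be the trace step: quantitatively identifying $\mathbf{W}^\top\mathbf{Q}\mathbf{W}$ with a GOE sharply enough that the diagonal-mean constraint $\Tr(\mathbf{Q})=o(d)$ can be extracted, while simultaneously controlling the column-wise partition-function perturbations from the previous step. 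The admissible range of $\alpha$ in Assumption~\ref{as:model} appears to be dictated precisely by the tightness of the second-moment bound on $Z_j$ in terms of $\sum_k p_k^2$.
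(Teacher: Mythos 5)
Your proposal follows the paper's own three-step argument essentially verbatim: concentration of the partition function $Z_j$ around $1+\frac{1}{2d}$ via the Gaussian m.g.f.\ and a second-moment bound controlled by $\sum_k p_k^2$; symmetry of PMI forcing $\mathbf{Q}\approx\mathbf{Q}^\top$ and hence involutarity, so that $\mathbf{Q}$ becomes a signature matrix after an orthogonal change of basis; and identification of the resulting bilinear form as an approximately Gaussian symmetric random matrix whose diagonal mean $(2l-d)/d$ must vanish, giving $l\approx d/2$. The only cosmetic difference is in the last step, where the paper extracts the constraint by invoking the symmetric limiting spectrum of such ensembles to conclude $\Tr(\PMI)\approx 0$ and then taking expectations, whereas you impose the vanishing diagonal mean directly as the condition for matching a mean-zero Gaussian ensemble --- the same equation either way.
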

\begin{figure}
    \centering
    \includegraphics[width=.5\textwidth]{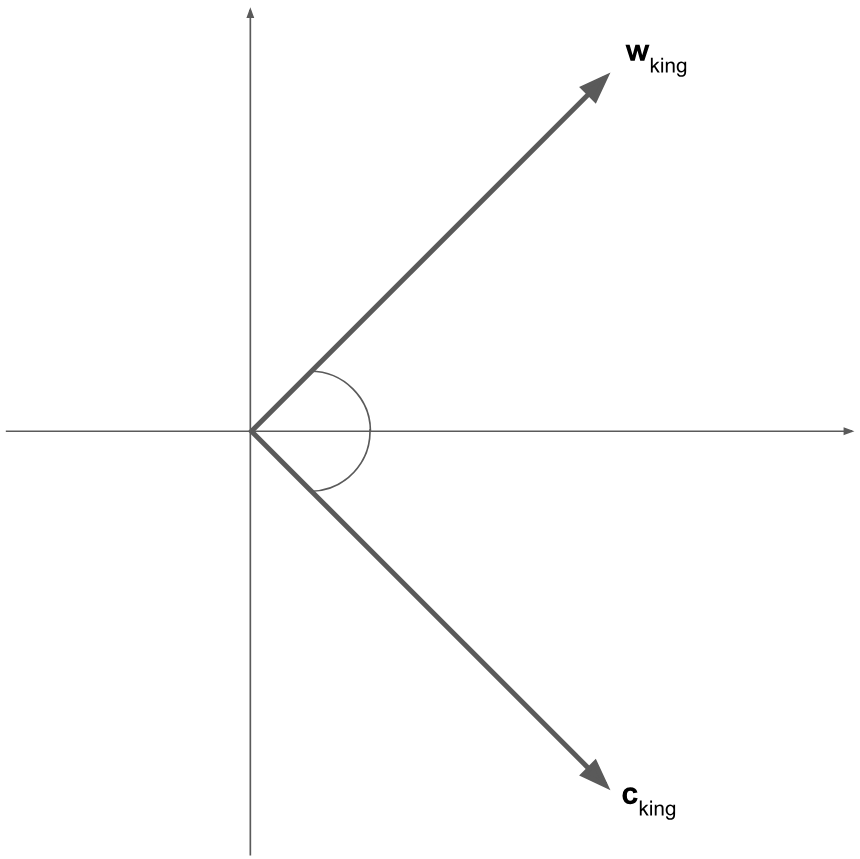}
    \caption{Context vector is a reflection of word vector in half the coordinates.}
    \label{fig:wc_vec}
\end{figure}
\noindent Figure \ref{fig:wc_vec} illustrates this idea for the case $d=2$. In general, our word and context vectors live in a $d$-dimensional vector space over real numbers ($\mathbb{R}^d$). By Theorem \ref{th:main} we can settle them in a $d/2$-dimensional vector space over complex numbers ($\mathbb{C}^{d/2}$) in such way that the context vector $\widetilde{\mathbf{c}}_i\in\mathbb{C}^{d/2}$ for a word $i$ is the \textit{complex conjugate} of the word vector $\widetilde{\mathbf{w}}_i\in\mathbb{C}^{d/2}$. This is in line with the Theorem 2 of \shortciteA{allen2018vec}, however they use completely different set of basic assumptions and their primary goal is to encode statistical properties of words directly into word vectors.

\section{Proof of Theorem \ref{th:main}}
The proof is divided into three steps: first we show that the partition function in \eqref{eq:model} concentrates around $1$, and thus $\propto$ can be replaced by $\approx$; using this fact we show that $\mathbf{Q}$ is (approximately) an involutary matrix, i.e. similar to $\diag(\{+1, -1\})$; and finally we show that the word-word pointwise-mutual information matrix is approximately symmetric Gaussian random matrix with weakly dependent entries. The latter fact immediately implies the statement of the Theorem \ref{th:main}.
\subsection{Concentration of the partition function}
We first need the following auxiliary result.
\begin{lemma}\label{lem:concentr_wqw}
Let $\mathbf{w}\sim\mathcal{N}(0,\sigma^2\mathbf{I})$, then $\forall t>0$ and any orthogonal $\mathbf{Q}$
\begin{equation}
\Pr\left(\left|\mathbf{w}^\top\mathbf{Qw}-\Tr(\mathbf{Q})\sigma^2\right|>t\sqrt{2d}\sigma^2\right)
\le\frac{1}{t^2}.\label{eq:concentr_wqw}
\end{equation}
\end{lemma}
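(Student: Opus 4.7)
The plan is to read the bound as a Chebyshev-type inequality for the quadratic form $\mathbf{w}^\top\mathbf{Q}\mathbf{w}$. Concretely, if I can show that the mean equals $\sigma^2\Tr(\mathbf{Q})$ and the variance is at most $2d\sigma^4$, then Chebyshev's inequality immediately delivers \eqref{eq:concentr_wqw}, since $\Pr(|X-\mu|>t\sqrt{2d}\sigma^2)\le \Pr(|X-\mu|>t\sqrt{\Var X})\le 1/t^2$.

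Computing the mean is the easy part: writing $\mathbf{w}=\sigma\mathbf{z}$ with $\mathbf{z}\sim\mathcal{N}(\mathbf{0},\mathbf{I})$ and using the standard identity $\E[\mathbf{z}^\top\mathbf{A}\mathbf{z}]=\Tr(\mathbf{A})$ yields $\E[\mathbf{w}^\top\mathbf{Q}\mathbf{w}]=\sigma^2\Tr(\mathbf{Q})$. For the variance, I would first symmetrize: because $\mathbf{w}^\top\mathbf{Q}\mathbf{w}$ is a scalar it equals $\mathbf{w}^\top\mathbf{S}\mathbf{w}$ with $\mathbf{S}=\tfrac12(\mathbf{Q}+\mathbf{Q}^\top)$. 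Then the classical formula for the variance of a Gaussian quadratic form in a symmetric matrix gives $\Var(\mathbf{w}^\top\mathbf{S}\mathbf{w})=2\sigma^4\Tr(\mathbf{S}^2)$. Using orthogonality, $\mathbf{Q}^\top\mathbf{Q}=\mathbf{Q}\mathbf{Q}^\top=\mathbf{I}$, so
\begin{equation*}
\Tr(\mathbf{S}^2)=\tfrac14\bigl(\Tr(\mathbf{Q}^2)+2d+\Tr((\mathbf{Q}^\top)^2)\bigr)=\tfrac12\bigl(\Tr(\mathbf{Q}^2)+d\bigr).
\end{equation*}

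The main obstacle, and the step where orthogonality has to do real work, is bounding $\Tr(\mathbf{Q}^2)$. Since $\mathbf{Q}$ is real orthogonal its eigenvalues $\lambda_1,\ldots,\lambda_d$ lie on the unit circle and come in complex-conjugate pairs, so $\Tr(\mathbf{Q}^2)=\sum_i\lambda_i^2$ is real with $|\Tr(\mathbf{Q}^2)|\le\sum_i|\lambda_i|^2=d$. Therefore $\Var(\mathbf{w}^\top\mathbf{Q}\mathbf{w})\le\sigma^4(\Tr(\mathbf{Q}^2)+d)\le 2d\sigma^4$. Feeding this variance bound and the mean into Chebyshev's inequality as described above closes the argument. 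The only thing I would double-check carefully in a full write-up is the identity $\Var(\mathbf{w}^\top\mathbf{A}\mathbf{w})=2\sigma^4\Tr(\mathbf{A}_{\mathrm{sym}}^2)$ for $\mathbf{w}\sim\mathcal{N}(\mathbf{0},\sigma^2\mathbf{I})$, which follows either by a short computation using Isserlis' theorem or by diagonalising $\mathbf{A}_{\mathrm{sym}}$ to reduce to independent chi-squared terms.
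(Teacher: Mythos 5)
Your proposal is correct and is essentially the paper's own argument: both apply Chebyshev's inequality to $X=\mathbf{w}^\top\mathbf{Q}\mathbf{w}$ with mean $\Tr(\mathbf{Q})\sigma^2$ and variance $\left(d+\Tr(\mathbf{Q}^2)\right)\sigma^4\le 2d\sigma^4$, using orthogonality only to bound $\Tr(\mathbf{Q}^2)\le d$. The sole difference is cosmetic --- you obtain the variance by symmetrizing to $\mathbf{S}=\tfrac12(\mathbf{Q}+\mathbf{Q}^\top)$ and invoking the standard formula $\Var(\mathbf{w}^\top\mathbf{S}\mathbf{w})=2\sigma^4\Tr(\mathbf{S}^2)$, whereas the paper expands the second moment coordinatewise; both give the same expression.
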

\begin{proof}
Consider the random variable $X = {\mathbf w}^\top\mathbf{Q} {\mathbf w}$, and let $\mathbf{Q} = [q_{ij}]$. We have
\begin{equation}
\E[X] = \E\left[\sum_{i,\,j} q_{ij} w_i w_j\right] = \sum_{i} q_{ii} \sigma^2=\Tr(\mathbf{Q})\sigma^2\label{eq:first_moment}    
\end{equation}
Therefore, 
\begin{equation}
(\E[X])^2 = \sum_{i,\,j} q_{ii} q_{jj} \sigma^4\label{eq:first_moment_sq}
\end{equation}
Further,
$$
\E\left[X^2\right] = \E\left[\left(\sum_{i,\,j} q_{ij} w_i w_j\right)^2\right] =\E\left[\sum_{i,\,j} q_{ii} q_{jj} w_i^2 w_j^2 + \sum_{i\ne j} (q^2_{ij} + q_{ij}q_{ji}) w_i^2 w_j^2\right]
$$
where we dropped the terms containing odd powers of $w_i$, as their expectations are zeros. Hence,
\begin{align}
\E\left[X^2\right] &= \sum_{i\ne j} q_{ii}q_{jj} \sigma^4 + \sum_{i}q_{ii}^2 \E[w_i^4] + \sum_{i\ne j}(q^2_{ij} + q_{ij}q_{ji})\sigma^4\notag\\
&=\sum_{i\ne j} q_{ii} q_{jj} \sigma^4 + 3\sum_{i}q_{ii}^2\sigma^4 + \sum_{i\ne j}(q^2_{ij} + q_{ij}q_{ji})\sigma^4 \notag\\ &= \sum_{i,\,j}q_{ii}q_{jj}\sigma^4 + 2\sum_{i}q_{ii}^2\sigma^4 + \sum_{i\ne j}(q^2_{ij} + q_{ij}q_{ji})\sigma^4\label{eq:second_moment}
\end{align}
From \eqref{eq:first_moment_sq} and \eqref{eq:second_moment} we have
$$
\Var[X] = \E\left[X^2\right] - (\E[X])^2 
= 2\sum_{i} q_{ii}^2 \sigma^4 + \sum_{i\ne j} (q^2_{ij} + q_{ij} q_{ji})\sigma^4    
$$
It is easy to see that $\sum_{i}q_{ii}^2 + \sum_{i\ne j} q^2_{ij} = d$ (the sum of squared elements of an orthogonal matrix). In this way,
\begin{equation}
\Var[X] = \left(d + \sum_{i} q_{ii}^2 + \sum_{i\ne j} q_{ij} q_{ji}\right) \sigma^4 = \left(d + \Tr(\mathbf{Q}^2)\right)\sigma^4\label{eq:variance}
\end{equation}
Applying Chebyshev inequality to $X$, and taking into account \eqref{eq:first_moment} and \eqref{eq:second_moment}, we have $\forall\epsilon>0$
$$
\Pr\left(\left|\mathbf{w}^\top\mathbf{Qw}-\Tr(\mathbf{Q})\sigma^2\right|>t\sqrt{2d}\sigma^2\right)
\le\frac{\left(d+\Tr(\mathbf{Q}^2)\right)\sigma^4}{t^2\cdot2d\sigma^4}    
$$
Since $\mathbf{Q}^2$ is orthogonal, its trace does not exceed $d$, and we obtain \eqref{eq:concentr_wqw}.
\end{proof}

\begin{remark}\label{rem:wqw_wordvecs}
When $\sigma^2=\frac1d$, the inequality~\eqref{eq:concentr_wqw} becomes:
\begin{equation}
\Pr\left(\left|\mathbf{w}^\top\mathbf{Qw}-\frac{\Tr(\mathbf{Q})}{d}\right|>t\sqrt{\frac2d}\right)
\le\frac{1}{t^2}.\label{eq:concentr_wqw2}
\end{equation}
\end{remark}

\begin{corollary}\label{col:concentr_wnorm}
Let $\mathbf{w}\sim\mathcal{N}(0,\frac1d\mathbf{I})$, then $\forall t>0$
\begin{equation}
\Pr\left(\left|\|\mathbf{w}\|^2-1\right|>t\sqrt{\frac2d}\right)\le\frac{1}{t^2}.\label{eq:w_norm_wordvecs}
\end{equation}
\end{corollary}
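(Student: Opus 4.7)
The plan is to read this off Lemma \ref{lem:concentr_wqw} (in the form given by Remark \ref{rem:wqw_wordvecs}) by the trivial choice $\mathbf{Q}=\mathbf{I}$. The identity matrix is orthogonal, so the lemma applies, and $\mathbf{w}^\top \mathbf{I}\mathbf{w} = \|\mathbf{w}\|^2$, while $\Tr(\mathbf{I})=d$ so $\Tr(\mathbf{Q})/d = 1$. Plugging these into \eqref{eq:concentr_wqw2} directly gives \eqref{eq:w_norm_wordvecs}.

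So the entire ``proof'' is really just the observation that $\mathbf{I}$ is an orthogonal matrix whose trace equals the dimension. There is no hard step: the variance computation and Chebyshev bound have already been done inside the lemma, and Remark \ref{rem:wqw_wordvecs} has already specialized the variance bound to the word-vector scaling $\sigma^2 = 1/d$. I would write one or two lines that say exactly this, with no further calculation needed.

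If I wanted to be pedantic, I could double-check the constants by recomputing directly: $d\|\mathbf{w}\|^2 \sim \chi^2_d$, so $\E\|\mathbf{w}\|^2 = 1$ and $\Var\|\mathbf{w}\|^2 = 2/d$, and Chebyshev gives exactly the stated bound. This cross-check is reassuring but not necessary, since it is strictly weaker than what Lemma \ref{lem:concentr_wqw} already provides. The only thing to be careful about is to state the corollary as a consequence of Remark \ref{rem:wqw_wordvecs} rather than Lemma \ref{lem:concentr_wqw} in its raw form, so that the $\sqrt{2/d}$ appears without rescaling.
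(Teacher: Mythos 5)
Your proposal is correct and matches the paper's own proof exactly: the paper derives the corollary by taking $\mathbf{Q}=\mathbf{I}$ in Lemma~\ref{lem:concentr_wqw} via Remark~\ref{rem:wqw_wordvecs}, which is precisely your one-line observation. The optional chi-square cross-check is a nice sanity check but, as you note, not needed.
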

\begin{proof}
The statement follows from Lemma~\ref{lem:concentr_wqw} and its Remark~\ref{rem:wqw_wordvecs}, when $\mathbf{Q}=\mathbf{I}$.
\end{proof}

\noindent Now we are ready to show that the partition function in \eqref{eq:model} concentrates around $1+\frac{1}{2d}$.
\begin{lemma}\label{lem:concentr_Z}
Let $Z_j$ be a partition function in \eqref{eq:model}, i.e. $Z_j=\sum_{i=1}^n p_i e^{\mathbf{w}_j^\top\mathbf{c}_i}$. Then
\begin{equation}
Z_j\approx 1+\frac{1}{2d}\quad(\forall j)\label{eq:concentr_Z}
\end{equation}
\end{lemma}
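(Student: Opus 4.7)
My plan is to condition on $\mathbf{w}_j$, split off the diagonal term $i=j$, and then reduce the remaining sum to computing a Gaussian moment-generating function plus a second-moment concentration argument.

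First I would write
\[
Z_j = p_j\,e^{\mathbf{w}_j^\top\mathbf{Q}\mathbf{w}_j} \;+\; \sum_{i\ne j} p_i\,e^{\mathbf{w}_j^\top\mathbf{Q}\mathbf{w}_i}.
\]
The first (diagonal) term is negligible: by Lemma~\ref{lem:concentr_wqw} with $\sigma^2=1/d$, the exponent $\mathbf{w}_j^\top\mathbf{Q}\mathbf{w}_j$ concentrates around $\Tr(\mathbf{Q})/d\in[-1,1]$, so the factor $e^{\mathbf{w}_j^\top\mathbf{Q}\mathbf{w}_j}=O(1)$; multiplied by $p_j$, which by Assumption~\ref{as:model} and the normalization $\sum_i p_i=1$ is of order $1/n^{\alpha}r_j^{1-\alpha}\ll 1/d$ for the vocabularies of interest, this contribution is much smaller than the claimed $1/(2d)$.

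For the off-diagonal sum I would condition on $\mathbf{w}_j$. Since $\mathbf{w}_i$ is independent of $\mathbf{w}_j$ and $\mathbf{w}_i\sim\mathcal{N}(\mathbf{0},\tfrac{1}{d}\mathbf{I})$, the scalar $\mathbf{w}_j^\top\mathbf{Q}\mathbf{w}_i$ is (conditionally) Gaussian with mean zero and variance $\tfrac{1}{d}\|\mathbf{Q}^\top\mathbf{w}_j\|^2=\tfrac{1}{d}\|\mathbf{w}_j\|^2$ because $\mathbf{Q}$ is orthogonal. The Gaussian MGF then gives
\[
\E\bigl[e^{\mathbf{w}_j^\top\mathbf{Q}\mathbf{w}_i}\mid\mathbf{w}_j\bigr]
= \exp\!\Bigl(\tfrac{\|\mathbf{w}_j\|^2}{2d}\Bigr).
\]
By Corollary~\ref{col:concentr_wnorm} we have $\|\mathbf{w}_j\|^2\approx 1$, so this conditional expectation is $\approx e^{1/(2d)}\approx 1+\tfrac{1}{2d}$. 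Summing over $i\ne j$ with weights $p_i$ and using $\sum_{i\ne j}p_i=1-p_j\approx 1$ yields the desired conditional mean.

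The last thing to verify is that the sum concentrates about its conditional mean. Conditional on $\mathbf{w}_j$ the random variables $\{e^{\mathbf{w}_j^\top\mathbf{Q}\mathbf{w}_i}\}_{i\ne j}$ are i.i.d.\ log-normal with variance $e^{2\|\mathbf{w}_j\|^2/d}-e^{\|\mathbf{w}_j\|^2/d}=O(1/d)$, so
\[
\Var\!\left[\sum_{i\ne j} p_i\,e^{\mathbf{w}_j^\top\mathbf{Q}\mathbf{w}_i}\,\bigg|\,\mathbf{w}_j\right]
\;=\; O\!\left(\tfrac{1}{d}\right)\sum_{i\ne j} p_i^2.
\]
Under the Zipf-like law \eqref{eq:unigram} with $\alpha\in(0,1]$, the normalization forces $\sum_i p_i^2\to 0$ as the vocabulary grows (it scales like a negative power of $n$, up to a logarithmic factor when $\alpha=1/2$). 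A Chebyshev step then shows $Z_j$ is close to its conditional mean with high probability, completing \eqref{eq:concentr_Z}.

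The main obstacle I anticipate is making the concentration step clean: one has to simultaneously handle the randomness of $\mathbf{w}_j$ (through $\|\mathbf{w}_j\|^2$) and of $\{\mathbf{w}_i\}_{i\ne j}$, and one has to justify that the $\sum p_i^2$ factor is indeed small under the unigram model, which is what makes the smoothing exponent $\alpha>0$ do real work in the argument.
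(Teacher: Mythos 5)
Your proposal follows essentially the same route as the paper's proof: condition on $\mathbf{w}_j$, split off the $i=j$ term and discard it because $p_j\to0$, compute the conditional mean of the off-diagonal sum via the Gaussian moment-generating function to get $e^{\|\mathbf{w}_j\|^2/(2d)}\approx1+\frac{1}{2d}$ using Corollary~\ref{col:concentr_wnorm}, and then control the conditional variance as $\bigl(e^{2\|\mathbf{w}_j\|^2/d}-e^{\|\mathbf{w}_j\|^2/d}\bigr)\sum_{i\ne j}p_i^2\to0$ by conditional independence and the Zipf normalization. The argument is correct and matches the paper's step for step; your observation of the logarithmic factor at $\alpha=\tfrac12$ is a minor refinement the paper glosses over.
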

\begin{proof}
We will first show that the conditional expectation $\E[Z_j\mid\mathbf{w}_j]$ depends on $\mathbf{w}_j$ mainly through its norm $\|\mathbf{w}_j\|$:
\begin{align}
\E\left[Z_j\mid\mathbf{w}_j\right]&=\E\left[\sum_{i=1}^n p_i e^{\mathbf{w}_j^\top\mathbf{c}_i}\bigm|\mathbf{w}_j\right]
=p_j\E\left[e^{\mathbf{w}_j^\top\mathbf{c}_j}\bigm|\mathbf{w}_j\right]+\sum_{i\ne j}p_i\E\left[e^{\mathbf{w}_j^\top\mathbf{c}_i}\bigm|\mathbf{w}_j\right]\notag\\
&=p_j\E\left[e^{\mathbf{w}_j^\top\mathbf{Qw}_j}\bigm|\mathbf{w}_j\right]+\sum_{i\ne j}p_i M_{\mathbf{c}_i}(\mathbf{w}_j)=p_je^{\mathbf{w}_j^\top\mathbf{Qw}_j}+\sum_{i\ne j}p_ie^{\frac12\mathbf{w}_j^\top(\frac1d\mathbf{I})\mathbf{w}_j}\notag\\
&=p_je^{\mathbf{w}_j^\top\mathbf{Qw}_j}+e^{\frac{1}{2d}\|\mathbf{w}_j\|^2}\sum_{i\ne j}p_i=p_j\left(e^{\mathbf{w}_j^\top\mathbf{Qw}_j}-e^{\frac{1}{2d}\|\mathbf{w}_j\|^2}\right)+e^{\frac{1}{2d}\|\mathbf{w}_j\|^2}\label{eq:mean_z}
\end{align}
where $M_{\mathbf{c}_i}(\mathbf{w}_j)$ is the moment-generating function of $\mathbf{c}_i$ at $\mathbf{w}_j$. In Lemma~\ref{lem:concentr_wqw} and Corollary~\ref{col:concentr_wnorm} it is shown that $\mathbf{w}_j^\top\mathbf{Qw}_j$ and  $\|\mathbf{w}_j\|^2$ concentrate well around their means $\Tr(\mathbf{Q})\frac1d$ and $1$ respectively and thus we can approximate
\begin{align}
\mathbf{w}_j^\top\mathbf{Qw}_j&\approx\frac{\Tr(\mathbf{Q})}d,\label{eq:wqw_approx}\\
\|\mathbf{w}_j\|^2&\approx 1.\label{eq:norm_approx}
\end{align}
The quantity $\frac{1}{2d}$ is small for $d\ge50$ which is typical for dimensionality of word vectors \shortcite{mikolov2013distributed}. Thus, using \eqref{eq:wqw_approx}, \eqref{eq:norm_approx}, and Maclaurin expansion for $x\mapsto e^x$ in the last term of \eqref{eq:mean_z}, we obtain
\begin{equation}
\E\left[Z_j\mid\mathbf{w}_j\right]\approx p_j\left(e^{\frac{\Tr(\mathbf{Q})}d}-e^{\frac{1}{2d}}\right)+1+\frac{1}{2d}.\label{eq:mean_z_approx}
\end{equation}
This approximation is very helpful as the right-hand side does not contain $\mathbf{w}_j$ and thus it is an approximation for the $\E[Z_j]$ as well. Let $H_{n,\alpha}$ be the normalizer in  \eqref{eq:unigram}, then 
$$
H_{n,\alpha}=\sum_{k=1}^n\frac{1}{k^{1-\alpha}}\sim\int_{1}^n\frac{dx}{x^{1-\alpha}}\sim \frac{n^{\alpha}}{\alpha},
$$ 
and thus we have\footnote{We abuse the notation here and use `$\sim$' to denote asymptotic equivalence, i.e. $f(n)\sim g(n)$ iff $\lim_{n\to\infty}\frac{f(n)}{g(n)}=1$.} 
\begin{equation}
p_j\sim\frac{\alpha}{r_j^{1-\alpha}n^\alpha}\to0\text{ as }n\to\infty.\label{eq:pj}
\end{equation}
Now, combining \eqref{eq:mean_z_approx}, \eqref{eq:pj}, and \eqref{eq:norm_approx}, we get
\begin{equation}
\E\left[Z_j\mid\mathbf{w}_j\right]\approx1+\frac{1}{2d}.\label{eq:mean_z_approx2}
\end{equation}
Now let us show that $\Var[Z_j]$ is small relative to the mean $\E[Z_j]$. First, we have
\begin{equation}
\Var\left[Z_j\mid\mathbf{w}_j\right]=\sum_{i=1}^np_i^2\Var\left[e^{\mathbf{w}_j^\top\mathbf{c}_i}\bigm|\mathbf{w}_j\right]+\sum_{i\ne k}p_i p_k\Cov\left[e^{\mathbf{w}_j^\top\mathbf{c}_i},\,e^{\mathbf{w}_j^\top\mathbf{c}_k}\bigm|\mathbf{w}_j\right]\label{eq:var_first}
\end{equation}
For the variance terms we have
\begin{equation}
\Var\left[e^{\mathbf{w}_j^\top\mathbf{c}_j}\bigm|\mathbf{w}_j\right]=\Var\left[e^{\mathbf{w}_j^\top\mathbf{Qw}_j}\bigm|\mathbf{w}_j\right]=0,\label{eq:var_zero}
\end{equation}
and
\begin{align}
\Var\left[e^{\mathbf{w}_j^\top\mathbf{c}_i}\bigm|\mathbf{w}_j\right]&=\E\left[e^{2\mathbf{w}_j^\top\mathbf{c}_i}\bigm|\mathbf{w}_j\right]-\left(\E\left[e^{\mathbf{w}_j^\top\mathbf{c}_i}\bigm|\mathbf{w}_j\right]\right)^2\notag\\
&=M_{\mathbf{c}_i}(2\mathbf{w}_j)-\left(M_{\mathbf{c}_i}(\mathbf{w}_j)\right)^2\notag\\
&=e^{\frac2d\|\mathbf{w}_j\|^2}-e^{\frac1d\|\mathbf{w}_j\|^2}.\label{eq:var_nnz}
\end{align}
Conditioned on $\mathbf{w}_j$, the random variables $\{e^{\mathbf{w}_j^\top\mathbf{c}_i}\}_{i\ne j}$ are independent, while $e^{\mathbf{w}_j^\top\mathbf{c}_j}$ is constant, and thus
\begin{equation}
\Cov\left[e^{\mathbf{w}_j^\top\mathbf{c}_i},\,e^{\mathbf{w}_j^\top\mathbf{c}_k}\bigm|\mathbf{w}_j\right]=0,\quad i\ne k.\label{eq:cov_zero}
\end{equation}
From \eqref{eq:var_first}, \eqref{eq:var_zero}, \eqref{eq:var_nnz}, and \eqref{eq:cov_zero}, we have
\begin{equation}
\Var[Z_j\mid\mathbf{w}_j]=\left(e^{\frac2d\|\mathbf{w}_j\|^2}-e^{\frac1d\|\mathbf{w}_j\|^2}\right)\sum_{i\ne j}p_i^2.\label{eq:var_simpl} 
\end{equation}
Notice that for some constant $C>0$
\begin{equation}
\sum_{i\ne j}p_i^2\sim\frac{C}{n^{2\alpha}}\int_1^n\frac{dx}{x^{2-2\alpha}}\sim\frac{C}{n^{\min\{1,\,2\alpha\}}}.\label{eq:pi_sq}
\end{equation}
Combining \eqref{eq:norm_approx}, \eqref{eq:var_simpl}, \eqref{eq:pi_sq}, and using Maclaurin expansion for $x\mapsto e^x$, we have
\begin{equation}
\Var[Z_j\mid\mathbf{w}_j]\sim\frac{C}{dn^{\min\{1,\,2\alpha\}}}\to0\quad\text{as}\quad n\to\infty.\label{eq:var_to_zero}    
\end{equation}
Now, the statement of the lemma follows from \eqref{eq:mean_z_approx2} and \eqref{eq:var_to_zero}.
\end{proof}
\begin{remark}
Lemma \ref{lem:concentr_Z} basically says that under the Assumptions \ref{as:isotropy} and \ref{as:context}, the model \eqref{eq:model} self-normalizes, i.e. the normalization term is almost constant and moreover it is almost $1$. This result is similar to the result of \shortciteA{andreas2015and}, but differs in that our model \eqref{eq:model} is not log-linear as its condition ($j$) and prediction ($i$) are both parameterized. The result of \shortciteA{goldberger2018self} on self-normalization of the NCE language models is closer to ours but the setup differs in that $p_i$ does not appear as a factor in their model. We finally notice that Lemma \ref{lem:concentr_Z} is an analogue of Lemma 2.1 from \shortciteA{arora2016latent} but adapted to our settings.
\end{remark}

\subsection{$\mathbf{Q}$ is an involutary matrix}
\begin{lemma}
Let $\mathbf{Q}$ be the matrix mapping word vectors to context vectors as in \eqref{eq:cont_vec}. Then, under Assumptions \ref{as:isotropy}, \ref{as:context}, and \ref{as:model}, $\mathbf{Q}$ is approximately an involutary matrix.
\end{lemma}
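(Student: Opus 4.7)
The plan is to exploit the fundamental symmetry of the word--word PMI matrix, which is forced by the fact that the context window is symmetric (Assumption~\ref{as:model}). From the model \eqref{eq:model} one reads off
$$\PMI(i,j)\;=\;\log\frac{p(i\mid j)}{p_i}\;=\;\mathbf{w}_j^\top\mathbf{c}_i-\log Z_j,$$
while on the other hand $\PMI(i,j)=\PMI(j,i)$ because the joint $p(i,j)$ is symmetric in its arguments. Equating the two expressions, applying Lemma~\ref{lem:concentr_Z} to cancel $\log Z_i\approx\log Z_j\approx\log\!\bigl(1+\tfrac{1}{2d}\bigr)$, and substituting $\mathbf{c}_i=\mathbf{Q}\mathbf{w}_i$ from Assumption~\ref{as:context}, I obtain the family of scalar approximate equalities
$$\mathbf{w}_j^\top\mathbf{Q}\mathbf{w}_i\;\approx\;\mathbf{w}_i^\top\mathbf{Q}\mathbf{w}_j\;=\;\mathbf{w}_j^\top\mathbf{Q}^\top\mathbf{w}_i\qquad\text{for every pair }(i,j).$$

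Next I would promote this pointwise statement to an operator-level one. Packing the word vectors into $\mathbf{W}:=[\mathbf{w}_1\mid\cdots\mid\mathbf{w}_n]\in\mathbb{R}^{d\times n}$, the display above rewrites as $\mathbf{W}^\top(\mathbf{Q}-\mathbf{Q}^\top)\mathbf{W}\approx\mathbf{0}$. Since by Assumption~\ref{as:isotropy} the columns of $\mathbf{W}$ are i.i.d.\ isotropic Gaussian, standard Gaussian-matrix concentration gives $\tfrac1n\mathbf{W}\mathbf{W}^\top\to\tfrac{1}{d}\mathbf{I}$ as $n\to\infty$, so for $n\gg d$ the matrix $\mathbf{W}$ has well-conditioned singular values, with smallest singular value of order $\sqrt{n/d}$. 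Sandwiching the approximate identity between $\mathbf{W}$ on the left and $\mathbf{W}^\top$ on the right therefore forces $\mathbf{Q}-\mathbf{Q}^\top\approx\mathbf{0}$, i.e.\ $\mathbf{Q}$ is approximately symmetric. Combining this with the orthogonality $\mathbf{Q}^\top\mathbf{Q}=\mathbf{I}$ from Assumption~\ref{as:context} finally yields $\mathbf{Q}^2\approx\mathbf{Q}^\top\mathbf{Q}=\mathbf{I}$, which is the defining property of an (approximately) involutary matrix.

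The main obstacle I anticipate is the quantitative passage from the pointwise approximate symmetry $\mathbf{W}^\top(\mathbf{Q}-\mathbf{Q}^\top)\mathbf{W}\approx\mathbf{0}$ to the operator statement $\mathbf{Q}\approx\mathbf{Q}^\top$. This requires carefully tracking two sources of error: the $O(1/d)$ slack inherited from Lemma~\ref{lem:concentr_Z}, and the spectral fluctuations of the random matrix $\mathbf{W}$ (in particular, a lower bound on its smallest singular value, so that an antisymmetric component of $\mathbf{Q}$ cannot hide in the near-null directions of $\mathbf{W}$). A secondary subtlety is the diagonal case $i=j$, which enters asymmetrically in the derivation and in the proof of Lemma~\ref{lem:concentr_Z}; one can either discard it---the $n(n-1)$ off-diagonal pairs are more than enough to pin down the $d\times d$ matrix $\mathbf{Q}-\mathbf{Q}^\top$---or handle it separately via the concentration of $\mathbf{w}_j^\top\mathbf{Q}\mathbf{w}_j$ around $\Tr(\mathbf{Q})/d$ established in Lemma~\ref{lem:concentr_wqw}.
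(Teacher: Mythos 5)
Your proposal is correct and follows essentially the same route as the paper: conclude $Z_j\approx\text{const}$ from Lemma~\ref{lem:concentr_Z}, deduce the factorization $\PMI\approx\mathbf{W}\mathbf{Q}^\top\mathbf{W}^\top$, use the symmetry of PMI together with the near-isotropy of the Gram matrix of the word vectors to force $\mathbf{Q}\approx\mathbf{Q}^\top$, and combine with orthogonality to get $\mathbf{Q}^2\approx\mathbf{I}$. Your added remarks on the smallest singular value of $\mathbf{W}$ and the diagonal terms are more careful than the paper's one-line invocation of $\mathbf{W}^\top\mathbf{W}\approx\frac1d\mathbf{I}$, but they do not change the argument.
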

\begin{proof}
\noindent The dimensionality $d$ of word vectors is usually in the range $[50, 1000]$ \shortcite{mikolov2013distributed}, and thus we can neglect the term $\frac{1}{2d}$ in \eqref{eq:concentr_Z} and approximate $Z_j\approx1$. This means that the model \eqref{eq:model} simplifies to
$$
p(i\mid j)\approx p(i)\cdot e^{\mathbf{w}_j^\top\mathbf{c}_i},
$$
or, equivalently
\begin{equation}
\ln\frac{p(i, j)}{p(i)p(j)}\approx\mathbf{c}_i^\top\mathbf{w}_j.\label{eq:pmi_factorization}
\end{equation}
where $p(i,\,j)$ is the probability that the words $i$ and $j$ co-occur in the same context window. Notice that the left-hand side in \eqref{eq:pmi_factorization} is the pointwise mutual information (PMI) between words $i$ and $j$. From \eqref{eq:cont_vec}
and \eqref{eq:pmi_factorization} we have
$$
\PMI(i, j)\approx\mathbf{w}_i^\top\mathbf{Q}^\top\mathbf{w}_j\quad\Leftrightarrow\quad \PMI\approx\mathbf{WQ}^\top\mathbf{W}^\top    
$$
where $\PMI$ stands for the PMI-matrix, and  $\mathbf{W}$ is a $n\times d$ matrix in which $i$-th row is $\mathbf{w}_i^\top$.
Since $p(i, j)=p(j, i)$, we should have $\PMI=\PMI^\top$, which implies
\begin{align}
\mathbf{W}\mathbf{Q}^\top\mathbf{W}^\top&\approx\mathbf{W}\mathbf{Q}\mathbf{W}^\top\notag\\
\Leftrightarrow \mathbf{W}^\top\mathbf{W}\mathbf{Q}^\top\mathbf{W}^\top\mathbf{W}&\approx\mathbf{W}^\top\mathbf{W}\mathbf{Q}\mathbf{W}^\top\mathbf{W}\notag\\
\Leftrightarrow \mathbf{Q}^\top&\approx\mathbf{Q},\label{eq:q_sym}
\end{align}
where we used the fact that $\mathbf{W}^\top\mathbf{W}\approx\frac1d\mathbf{I}$. Since $\mathbf{Q}$ is assumed to be orthogonal, from \eqref{eq:q_sym} we get
$$
\mathbf{Q}^2\approx\mathbf{I}
$$
Thus, $\mathbf{Q}$ is approximately an involutary matrix, and we can choose it to be a signature matrix, i.e. a diagonal matrix with $\pm1$ on the diagonal\footnote{One can show that any involutary matrix can be represented as $\mathbf{P}^\top\diag(\pm1,\ldots,\pm1)\mathbf{P}$, where $\mathbf{P}$ is orthogonal, and thus by reparametrization $\widetilde{\mathbf{w}}_i=\mathbf{P}\mathbf{w}_i\,\,{\stackrel{\text{iid}}{\sim}}\,\,\mathcal{N}(\mathbf{0},\frac1d\mathbf{I})$, we can still have \eqref{eq:q_diag}.}:
\begin{equation}
\mathbf{Q}:=\diag(\pm1,\ldots,\pm1).\label{eq:q_diag}
\end{equation}
\end{proof}
In this way, context vectors are word vectors with some of the coordinates being multiplied by $-1$. The natural question is: how many of the coordinates should be ``flipped''?

\subsection{PMI as a random matrix}
Let $\mathbf{x}_i\in\mathbb{R}^l$ be the vector consisting of the first $l$ coordinates of $\mathbf{w}_i$, i.e.
\begin{equation}
\mathbf{x}_i=\mathbf{w}_{i,1:l}=\begin{bmatrix}w_{i1},\ldots,w_{il}\end{bmatrix},\label{eq:x_vec}
\end{equation}
and let $\mathbf{y}_i\in\mathbb{R}^{d-l}$ be the vector consisting of the last $d-l$ coordinates of $\mathbf{w}_i$, i.e.
\begin{equation}
\mathbf{y}_i=\mathbf{w}_{i,\,l+1:d}=\begin{bmatrix}w_{i(l+1)},\ldots,w_{id}\end{bmatrix}.\label{eq:y_vec}
\end{equation}
Due to Assumption \eqref{eq:wordvec}, $\mathbf{x}_i$'s are i.i.d. draws from $\mathcal{N}\left(0,\frac1d\mathbf{I}_{l\times l}\right)$, $\mathbf{y}_i$'s are i.i.d. draws from $\mathcal{N}\left(0, \frac1d\mathbf{I}_{(d-l)\times(d-l)}\right)$, and $\{\mathbf{x}_i\}$, $\{\mathbf{y}_i\}$ are jointly independent.
Without restricting the generality, assume that the first $l$ diagonal elements in \eqref{eq:q_diag} are equal to $+1$, and the rest $d-l$ elements are equal to $-1$.
Thus $$
\mathbf{w}_i^\top\mathbf{Q}^\top\mathbf{w}_j=\mathbf{x}_i^\top\mathbf{x}_j-\mathbf{y}_i^\top\mathbf{y}_j
$$
For $i\ne j$, $\mathbf{x}_i^\top\mathbf{x}_j$ is a sum of  $l$ i.i.d. random variables with mean 0 and variance $\frac{1}{d^2}$, and by Central Limit Theorem, $\mathbf{x}_i^\top\mathbf{x}_j\approx\mathcal{N}\left(0, \frac{l}{d^2}\right)$. Similarly, $\mathbf{y}_i^\top\mathbf{y}_j\approx\mathcal{N}\left(0,\frac{d-l}{d^2}\right)$, and thus
\begin{equation}
\mathbf{w}_i^\top\mathbf{Q}^\top\mathbf{w}_j\approx\mathcal{N}\left(0, \frac{1}{d}\right),\quad i\ne j.\label{eq:pmi_nondiag}
\end{equation}
For $i=j$, we have
\begin{equation}
    \mathbf{w}_i^\top\mathbf{Q}^\top\mathbf{w}_i=\|\mathbf{x}_i\|^2-\|\mathbf{y}_i\|^2\\
    \sim\frac1d(\chi^2_{l}-\chi^2_{d-l})\approx\mathcal{N}\left(\frac{2l-d}{d}, \frac2d\right),\label{eq:pmi_diag}
\end{equation}
where $\chi^2_l$ is a chi-square random variable with $l$ degrees of freedom. 
By combinatorial argument (similar to that of Lemma~\ref{lem:concentr_wqw}) one can show that covariance between any two distinct and non-symmetric entries of $\mathbf{WQ}^\top\mathbf{W}^\top$ is zero, and thus
\begin{align}
&\Cov\left[\PMI_{ij},\,\PMI_{pq}\right]\approx0\notag\\&\forall\,(i,j)\ne(p,q),\,(p,q)\ne(j,i)\label{eq:zero_cov}
\end{align}
Moreover, we can show that $\PMI_{ij}$ and $\PMI_{pq}$ tend to be independent when $d$ is large enough. For the case $i\ne p$, $j\ne q$ this follows directly from the Assumption \eqref{eq:wordvec}. Now consider the case $i=p$, $j\ne q$ (two distinct elements from the same row). The case $i\ne p$, $j=q$ (two distinct elements from the same column) can be analyzed similarly. Let $\mathbf{t}=\begin{bmatrix}t_1 & t_2\end{bmatrix}^\top$. Then the moment-generating function (m.g.f.) of $\begin{bmatrix}\PMI_{ij} & \PMI_{iq}\end{bmatrix}^\top$ at $\mathbf{t}$ is
\begin{align}
M_{[\PMI_{ij}\,\, \PMI_{iq}]^\top}(\mathbf{t})&=\E\left[e^{t_1\mathbf{w}_i^\top\mathbf{c}_j+t_2\mathbf{w}_i^\top\mathbf{c}_q}\right]
=\E\left[\E\left[e^{t_1\mathbf{w}_i^\top\mathbf{c}_j+t_2\mathbf{w}_i^\top\mathbf{c}_j}\bigm|\mathbf{w}_i\right]\right]\notag\\
&=\E\left[M_{\mathbf{c}_j}(t_1\mathbf{w}_i)\cdot M_{\mathbf{c}_q}(t_2\mathbf{w}_i)\right]
=\E\left[e^{\frac{1}{2d} t_1^2\|\mathbf{w}_i\|^2}\cdot e^{\frac{1}{2d} t_2^2\|\mathbf{w}_i\|^2}\right]\notag\\
&=\E\left[e^{\frac{1}{2d}\|\mathbf{t}\|^2\|\mathbf{w}_i\|^2}\right]=M_{\chi^2_d}\left(\frac{1}{2d^2}\|\mathbf{t}\|^2\right)
=\left(1-\frac{\|\mathbf{t}\|^2}{d^2}\right)^{-\frac{d}{2}}\notag\\&=e^{\frac{\|\mathbf{t}\|^2}{2d}}\left[1+O\left(\frac{1}{d^3}\right)\right],
\end{align}
and the last expression for large $d$ is approximately $e^{\frac{\|\mathbf{t}\|^2}{2d}}$ which is the m.g.f. of a two-dimensional Gaussian vector with distribution $\mathcal{N}(\mathbf{0},\frac1d\mathbf{I}_{2\times2})$. Hence 
\begin{equation}\begin{bmatrix}\PMI_{ij} & \PMI_{iq}\end{bmatrix}^\top\approx\mathcal{N}\left(\mathbf{0},\textstyle{\frac1d}\mathbf{I}_{2\times2}\right),\label{eq:joint_gaus}
\end{equation}
which implies approximate independence between $\PMI_{ij}$ and $\PMI_{iq}$.
Hence, from \eqref{eq:pmi_nondiag} and \eqref{eq:pmi_diag} we conclude that for the PMI matrix
\begin{itemize}
    \item the  above-diagonal entries have (approximately) distribution $\mathcal{N}\left(0, \frac1d\right)$, 
    \item the diagonal entries have (approximately) distribution $\mathcal{N}\left(\frac{2l-d}{d}, \frac2d\right)$,
    \item all entries on and above its diagonal tend to pairwise independence.
\end{itemize}
This means that the PMI matrix is an approximately symmetric Gaussian random matrix with weakly dependent entries and it is known that the empirical  distribution of eigenvalues of such matrix approaches \textit{symmetric around $0$} distribution as its size $n$ increases \shortcite{de1999norm}.\footnote{Usually papers on random matrix theory do not state this explicitly. Rather they show that the limiting distribution has odd moments equal to 0. This is the case for the referenced paper as well, see the proof of their Theorem 2.2.} Thus, we should have
\begin{equation}
    \Tr(\PMI)\approx0\quad\Leftrightarrow\quad\sum_{i=1}^n\mathbf{w}_i^\top\mathbf{Q}^\top\mathbf{w}_i\approx0\quad
    \Leftrightarrow\quad \sum_{i=1}^n\|\mathbf{x}_i\|^2\approx\sum_{i=1}^n\|\mathbf{y}\|^2.\label{eq:tr_zero}
\end{equation}
Recall, that $d\|\mathbf{x}_i\|^2\sim\chi^2_l$ and $d\|\mathbf{y}_i\|^2\sim\chi^2_{d-l}$. Hence, taking expectation on both sides of \eqref{eq:tr_zero} we have
\begin{equation}
    \frac{n}{d}\E\left[\chi^2_l\right]\approx \frac{n}{d}\E\left[\chi^2_{d-l}\right]\quad \Leftrightarrow\quad l\approx d-l\quad\Leftrightarrow\quad l\approx\frac{d}{2}.\label{eq:main}
\end{equation}
which concludes the proof of Theorem~\ref{th:main}.
\begin{remark}
In terms of the introduced notation \eqref{eq:x_vec} and \eqref{eq:y_vec}, each word's vector $\mathbf{w}_i$ splits into two subvectors $\mathbf{x}_i$ and $\mathbf{y}_i$, and due to Theorem \ref{th:main}, our model \eqref{eq:model} for generating a word $i$ in the context of a word $j$ can be rewritten as
$$
p(i\mid j)\approx p_i\cdot e^{\mathbf{x}_j^\top\mathbf{x}_i-\mathbf{y}_j^\top\mathbf{y}_i}.
$$
Interestingly, embeddings of the first type ($\mathbf{x}_i$ and $\mathbf{x}_j$) are responsible for pulling the word $i$ into the context of the word $j$, while embeddings of the second type ($\mathbf{y}_i$ and $\mathbf{y}_j$) are responsible for pushing the word $i$ away from the context of the word $j$. We hypothesize that the $\mathbf{x}$-embeddings are more related to semantics, whereas the $\mathbf{y}$-embeddings are more related to syntax. We defer testing of this hypothesis to our future work.
\end{remark}

\section{Empirical verification}
\begin{figure*}[h]
\begin{center}
\includegraphics[width=.45\textwidth]{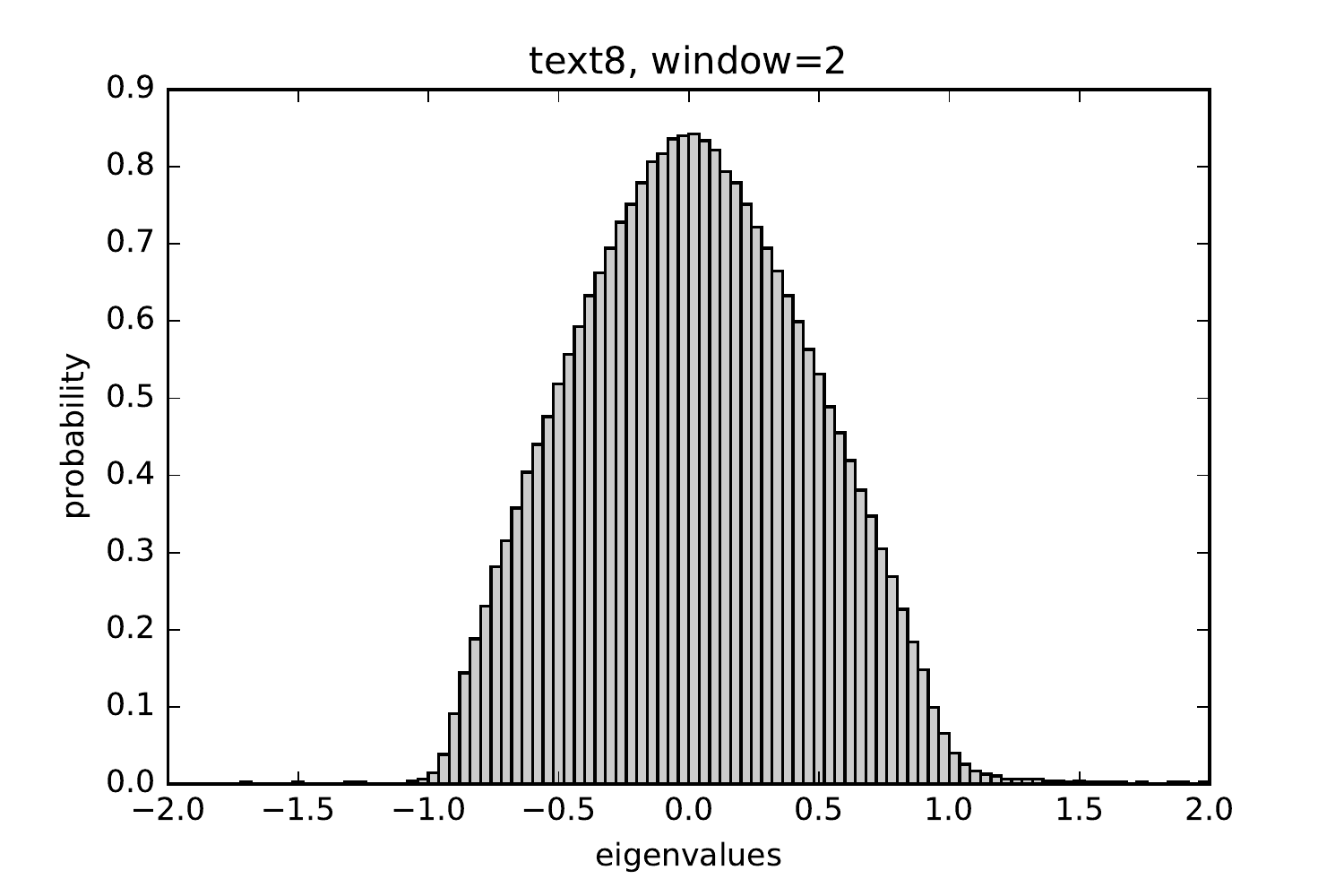}
\includegraphics[width=.45\textwidth]{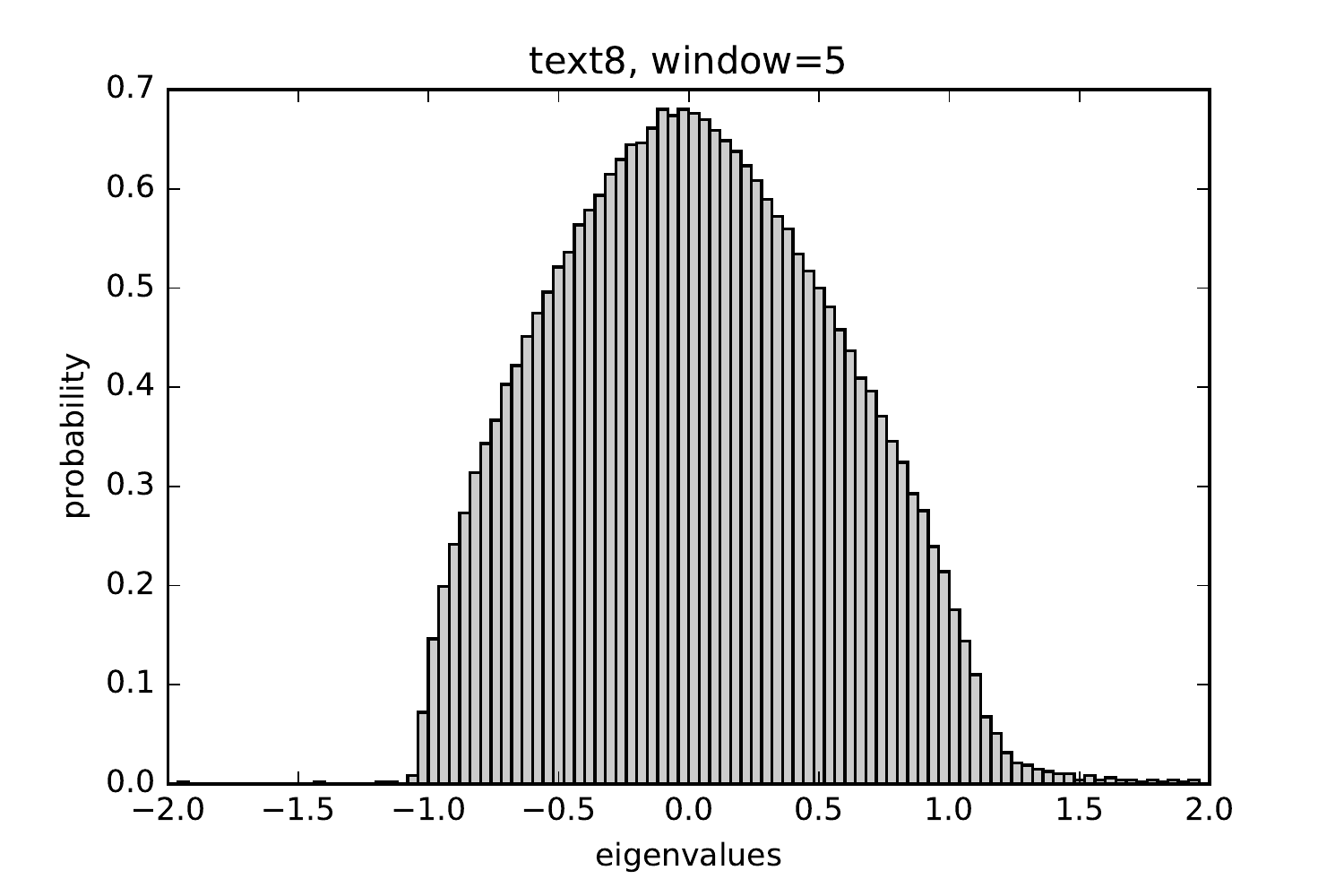}\\
\includegraphics[width=.45\textwidth]{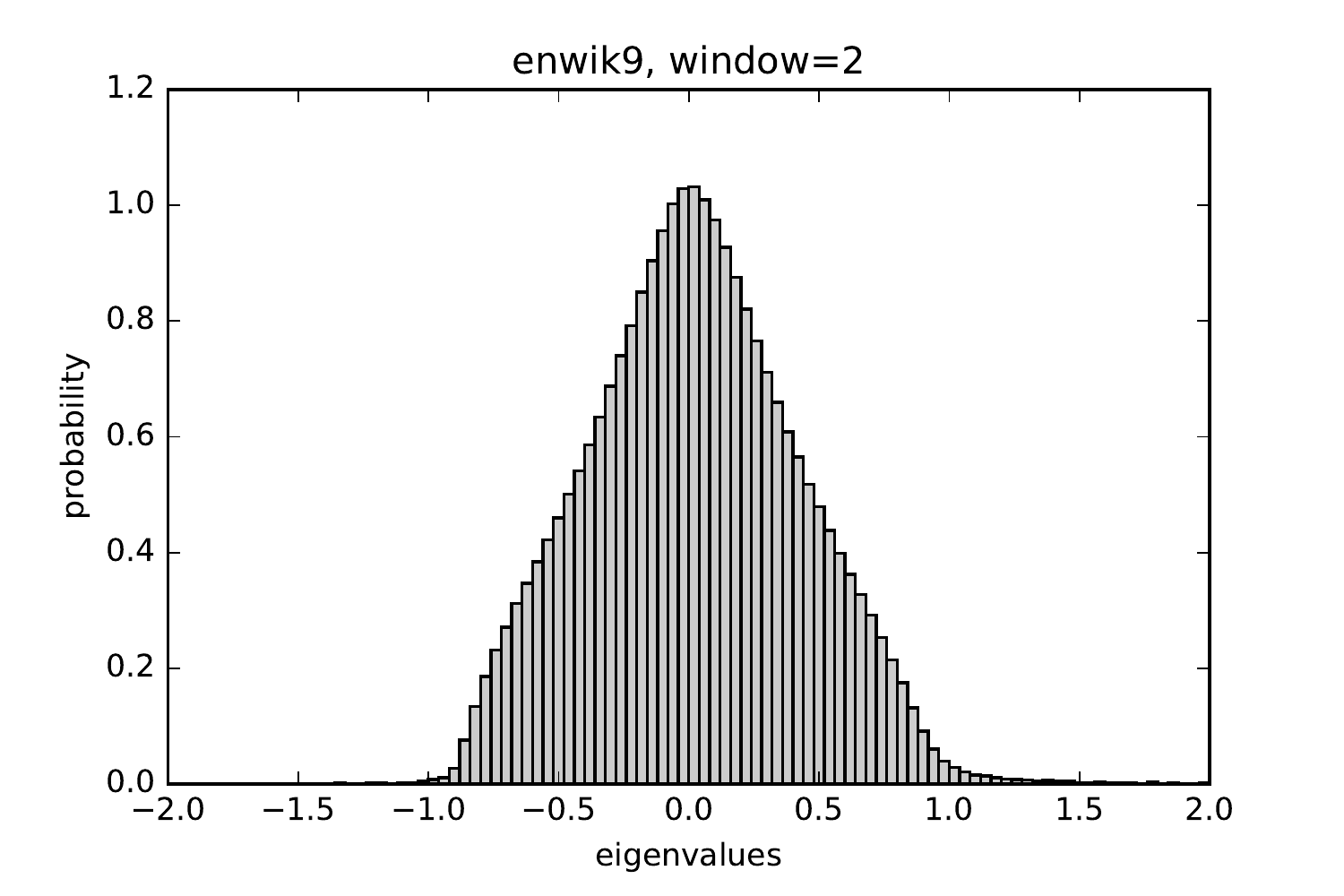}
\includegraphics[width=.45\textwidth]{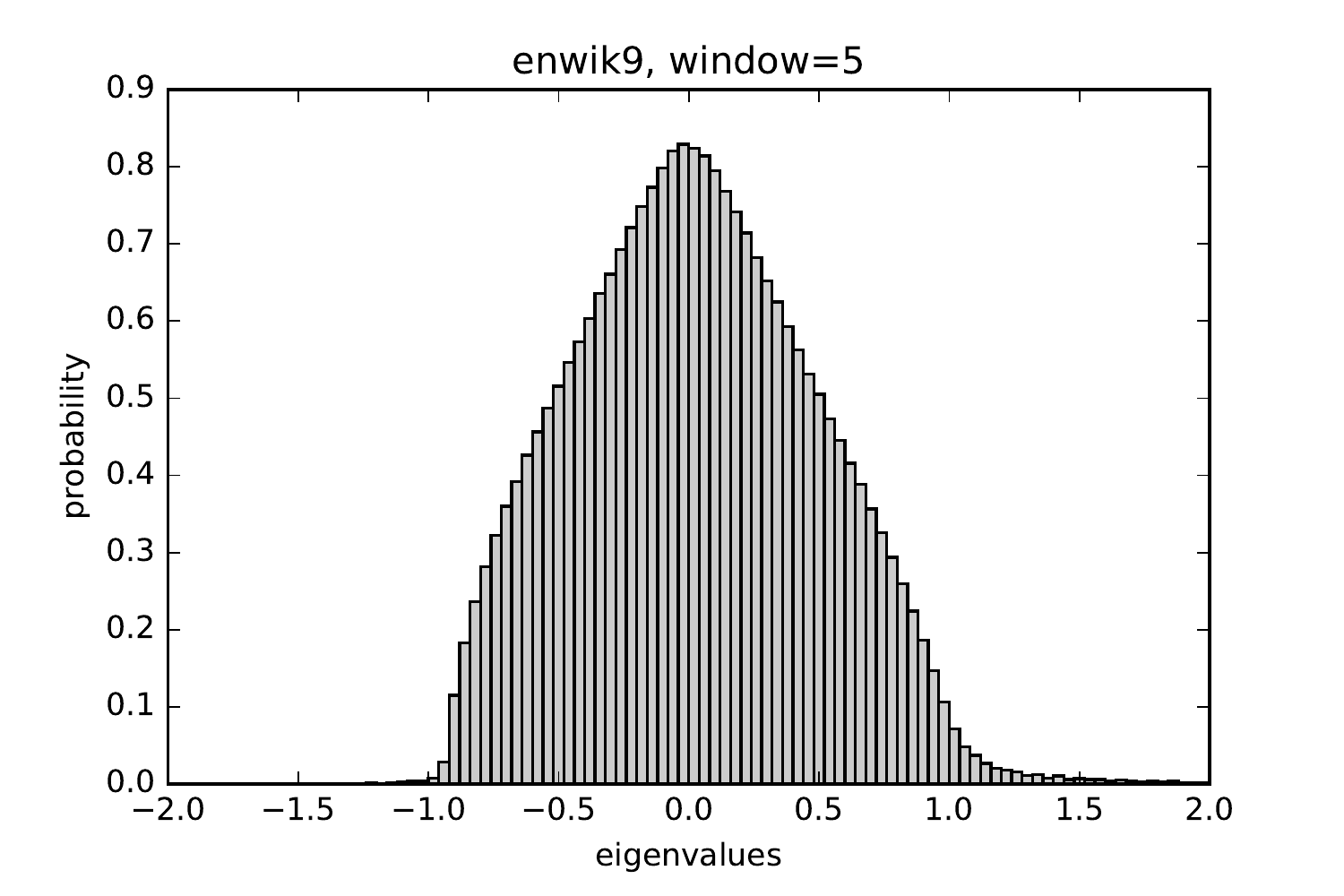}
\end{center}
\caption{Empirical distribution of eigenvalues of PMI matrices.}
\label{fig:eigvals}
\end{figure*}
To verify that the real-world PMI matrices have indeed symmetric (around 0) distribution of their eigenvalues, we consider two widely-used datasets, text8 and enwik9,\footnote{\url{http://mattmahoney.net/dc/textdata.html}. The enwik9 data was processed with the Perl-script \texttt{wikifil.pl} provided on the same webpage. It filters Wikipedia XML dumps to ``clean'' text consisting only of lowercase letters and spaces (never consecutive).} from which we extract PMI matrices using the \texttt{hyperwords} tool of \shortciteA{levy2015improving}. We use the default settings for all hyperparameters, except word frequency threshold and context window size. We were ignoring words that appeared less than 100 times and 150 times in text8 and enwik9 correspondingly, resulting in vocabularies of 11,815 and 29,145 correspondingly. We additionally experiment with context window 5, which by default is set to 2, and which we believe could affect the results. The eigenvalues of the PMI matrices are then calculated using the TensorFlow library \shortcite{abadi2016tensorflow}, and the above-mentioned threshold of 150 for enwik9 was chosen to fit the resulting PMI matrix into the GPU memory (12GB, NVIDIA Titan X Maxwell). The histograms of eigenvalues are provided in Figure~\ref{fig:eigvals}.
\begin{table}
\begin{center}
\begin{tabular}{l r r r}
\toprule
Data set & Size & \multicolumn{1}{c}{$T$} & \multicolumn{1}{c}{$|\mathcal{W}|$} \\
 \midrule
text8 & 100 MB & 17M & 254K\\
enwik9 & 715 MB & 124M & 833K\\
\bottomrule
\end{tabular}
\end{center}
\vspace{-10pt}\caption{Corpus statistics. $T=$ total length in tokens; $|\mathcal{W}|=$ number of unique words.}
\label{corpus_stats}
\end{table}
As we can see, the distributions are not perfectly symmetric with a little right skewness, but in general they seem to be symmetric. Notice, that this is in stark contrast with the equation (2.5) from \shortciteA{arora2016latent}, which claims that the PMI matrix should be approximately positive semi-definite, i.e. that it should have mostly positive eigenvalues. Also, notice that the shapes of distributions are far from resembling the Wigner semicircle law $x\mapsto\frac{1}{2\pi}\sqrt{4-x^2}$, which is the limiting distribution for the eigenvalues of many random symmetric matrices with i.i.d. entries \shortcite{wigner1955characteristic,wigner1958distribution}. This means that the entries of a typical PMI matrix \textit{are} dependent, otherwise we would observe approximately semicircle distributions for its eigenvalues. Interestingly, there is a striking similarity between the shapes of distributions in Figure \ref{fig:eigvals} and of spectral densities of the scale-free random graphs \shortcite{farkas2001spectra} and random graphs with expected degrees \shortcite{preciado2017moment} which arise in physics and network science. Notice that the connection between human language structure and scale-free random graphs was observed previously by \shortciteA{cancho2001small}, and it would be interesting to dive deeper in this direction.

\section{Weight tying in skip-gram model}
\setlength{\tabcolsep}{3.5pt}
\begin{table}[h]
\centering
\begin{footnotesize}
\begin{tabular}{l | l c | c c c c | c c}
\toprule
\multirow{2}{*}{Data} & \multirow{2}{*}{Model} & \multirow{2}{*}{Size} & \shortciteauthor{finkelstein2002placing} & \shortciteauthor{bruni2012distributional} & \shortciteauthor{radinsky2011word} & \shortciteauthor{luong2013better} & Google & MSR \\
 & & & WordSim & MEN  & M. Turk & Rare Words &      & \\
 \midrule
\multirow{2}{*}{text8} & SGNS & 28M & .681    & .241 & .631    & .072       & .307 & .286 \\
 & SGNS+WT & 14M & .637    & .215 & .624    & .057       & .314 & .319 \\
\midrule
\multirow{2}{*}{enwik9} & SGNS & 87M & .671 & .268 & .662 & .213 & .558 & .410 \\
 & SGNS+WT & 44M & .633 & .236 & .639 & .175 & .516 & .429\\
 \bottomrule
\end{tabular}
\end{footnotesize}
\caption{Evaluation of word embeddings in the analogy tasks (Google and MSR) and in the similarity tasks (the rest). For word similarities evaluation metric is the Spearman's correlation with the human ratings, while for word analogies it is the percentage of correct answers. Model sizes are in number of trainable parameters.}
\label{tab:emb_eval}
\end{table}
We would like to apply our results to tie embeddings in the skip-gram model of \shortciteA{mikolov2013distributed} in a theoretically grounded way. One may argue that our key Assumption~\ref{as:model} differs from the softmax-prediction of the skip-gram model. Although this is true, in fact the softmax normalization is never used in practice when training skip-gram. Instead it is common to replace the softmax cross-entropy by the negative sampling objective (equation (4) in \shortciteA{mikolov2013distributed}), and its optimization is almost equivalent to finding a low-rank approximation of the shifted word-word PMI matrix in the form $\mathbf{w}_i^\top\mathbf{c}_j\approx\PMI_{ij}-\log k$ \shortcite{levy2014neural}. Since our Assumptions lead to the same conclusion up to a constant shift \eqref{eq:pmi_factorization}, we believe that Theorem~\ref{th:main} can be directly applied to tie word ($\mathbf{w}_i$) and context ($\mathbf{c}_i$) embeddings in the SGNS model. For this purpose we form a vector $\mathbf{q}\in\mathbb{R}^{d}$ of $d$ i.i.d. draws from the Rademacher distribution\footnote{Rademacher distribution is a discrete probability distribution where a random variate $X$ has a 50\% chance of being $+1$ and a 50\% chance of being $-1$.} and then put 
\begin{equation}
\mathbf{c}_i=\mathbf{q} \odot \mathbf{w}_i\label{eq:wt}
\end{equation}
for all words $i$ in the vocabulary. This is equivalent to \eqref{eq:cont_vec} when the matrix $\mathbf{Q}$ has the special diagonal form \eqref{eq:q_diag}. Such modification of the SGNS is refered to as `SGNS + WT'. The word embeddings $\mathbf{w}_i$ are initialized randomly, and then trained on text8 and enwik9 using the reference word2vec implementation from the TensorFlow codebase\footnote{\url{https://github.com/tensorflow/models/blob/master/tutorials/embedding/word2vec.py}} with all hyperparameters set to their default values except that we choose the learning rate to decay 20\% faster in the weight-tied model. This additional tuning of the learning rate decay is not surprising: the model with tied embeddings has two times less parameters compared to the model with untied weights, and this leads to a significant change of the optimization landscape, which in turn results in the need to tune the most sensitive hyperparameter --- the learning rate (or its decay schedule). As is standard nowadays the trained embeddings are evaluated on several word similarity and word analogy tasks. We used the hyperwords tool of \shortciteA{goldberg2014word2vec} and we refer the reader to their paper for the methodology of evaluation. We only mention here few key points:
\begin{itemize}
    \item Our goal is not to beat state of the art, but to empirically validate the statement of Theorem~\ref{th:main}. This is why we were evaluating only word (input) embeddings for both SGNS and SGNS+WT. I.e., we were not adding context vectors to word vectors in the similarity tasks, as it is usually done nowadays.
    \item Word similarity datasets contain word pairs together with human-assigned similarity scores. The word vectors are evaluated by ranking the pairs according to their cosine similarities and measuring the correlation (Spearman's $\rho$) with the human ratings.
    \item For answering analogy questions ($a$ is to $b$ as $c$ is to $?$) we use the 3CosMul of \shortciteA{levy2014linguistic} and the evaluation metric for the analogy questions is the percentage of correct answers.
\end{itemize}  
The results of evaluation are provided in Table \ref{tab:emb_eval}. As we can see, SGNS + WT produces embeddings comparable in quality with those produced by the baseline SGNS model despite having 50\% fewer parameters. This also empirically validates the statement of our Theorem~\ref{th:main}. We notice that similar results can be obtained by letting the linear transform $\mathbf{Q}$ be a trainable matrix as shown by \shortciteA{gulordava2018represent}. The main difference of our approach is that we know exactly the form of $\mathbf{Q}$, and thus we do not need to learn it.

\section{Conclusion}
There is a remarkable relationship between human language and other branches of science, and we can get interesting and practical results by studying deeper such relationships. For example, the modern theory of random matrices is replete with theoretical results that can be immediately applied to models of natural language once such models are cast into the appropriate probabilistic setting, as is done in this paper.

\vskip 0.2in
\bibliography{sample}
\bibliographystyle{theapa}

\end{document}